\documentclass{article}
\usepackage[latin9]{inputenc}
\usepackage{array}
\usepackage{float}
\usepackage{wrapfig}
\usepackage{units}
\usepackage{multirow}
\usepackage{amsmath}
\usepackage{amsthm}
\usepackage{graphicx}
\usepackage[unicode=true,
 bookmarks=false,
 breaklinks=false,pdfborder={0 0 1},backref=section,colorlinks=false]
 {hyperref}

\makeatletter

\providecommand{\tabularnewline}{\\}
\floatstyle{ruled}
\newfloat{algorithm}{tbp}{loa}
\providecommand{\algorithmname}{Algorithm}
\floatname{algorithm}{\protect\algorithmname}

\theoremstyle{plain}
\newtheorem{thm}{\protect\theoremname}
  \theoremstyle{definition}
  \newtheorem{defn}[thm]{\protect\definitionname}
  \theoremstyle{plain}
  \newtheorem{lem}[thm]{\protect\lemmaname}
  \theoremstyle{plain}
  \newtheorem{cor}[thm]{\protect\corollaryname}
  \theoremstyle{plain}
  \newtheorem{prop}[thm]{\protect\propositionname}

\usepackage{iclr2017_conference}
\usepackage{times}\usepackage{url}

%


\usepackage{algpseudocode}
\usepackage{amssymb}
\usepackage{nicefrac}

\@ifundefined{showcaptionsetup}{}{%
 \PassOptionsToPackage{caption=false}{subfig}}
\usepackage{subfig}
\makeatother

  \providecommand{\corollaryname}{Corollary}
  \providecommand{\definitionname}{Definition}
  \providecommand{\lemmaname}{Lemma}
  \providecommand{\propositionname}{Proposition}
\providecommand{\theoremname}{Theorem}

\begin{document}
\author{Tao Wei\footnotemark[2] \\
University at Buffalo \\
Buffalo, NY 14260 \\
\texttt{taowei@buffalo.edu} \\
\And
Changhu Wang \\
Microsoft Research \\
Beijing, China, 100080 \\
\texttt{chw@microsoft.com} \\
\And
Chang Wen Chen \\
University at Buffalo \\
Buffalo, NY 14260 \\
\texttt{chencw@buffalo.edu}}

\title{Modularized Morphing of Neural Networks}
\maketitle
\begin{abstract}
In this work we study the problem of network morphism, an effective
learning scheme to morph a well-trained neural network to a new one
with the network function completely preserved. Different from existing
work where basic morphing types on the layer level were addressed,
we target at the central problem of network morphism at a higher level,
i.e., \textit{how a convolutional layer can be morphed into an arbitrary
module of a neural network}. To simplify the representation of a network,
we abstract a module as a graph with blobs as vertices and convolutional
layers as edges, based on which the morphing process is able to be
formulated as a graph transformation problem. Two atomic morphing
operations are introduced to compose the graphs, based on which modules
are classified into two families, i.e., simple morphable modules and
complex modules. We present practical morphing solutions for both
of these two families, and prove that any reasonable module can be
morphed from a single convolutional layer. Extensive experiments have
been conducted based on the state-of-the-art ResNet on benchmark datasets,
and the effectiveness of the proposed solution has been verified.
\end{abstract}

\section{Introduction}

\renewcommand{\thefootnote}{\fnsymbol{footnote}}
\footnotetext[2]{Tao Wei performed this work while being an intern at Microsoft Research Asia.}
\renewcommand{\thefootnote}{\arabic{footnote}}

\renewcommand{\cite}{\citep}

Deep convolutional neural networks have continuously demonstrated
their excellent performances on diverse computer vision problems.
In image classification, the milestones of such networks can be roughly
represented by LeNet \cite{LeCunBoserDenkerEtAl1989}, AlexNet \cite{KrizhevskySutskeverHinton2012},
VGG net \cite{SimonyanZisserman2014}, GoogLeNet \cite{SzegedyLiuJiaEtAl2014},
and ResNet \cite{HeZhangRenEtAl2015}, with networks becoming deeper
and deeper. However, the architectures of these network are significantly
altered and hence are not backward-compatible. Considering a life-long
learning system, it is highly desired that the system is able to update
itself from the original version established initially, and then evolve
into a more powerful one, rather than re-learning a brand new one
from scratch.

Network morphism \cite{WeiWangRuiEtAl2016} is an effective way towards
such an ambitious goal. It can morph a well-trained network to a new
one with the knowledge entirely inherited, and hence is able to update
the original system to a compatible and more powerful one based on
further training. Network morphism is also a performance booster and
architecture explorer for convolutional neural networks, allowing
us to quickly investigate new models with significantly less computational
and human resources. However, the network morphism operations proposed
in \cite{WeiWangRuiEtAl2016}, including depth, width, and kernel
size changes, are quite primitive and have been limited to the level
of layer in a network. For practical applications where neural networks
usually consist of dozens or even hundreds of layers, the morphing
space would be too large for researchers to practically design the
architectures of target morphed networks, when based on these primitive
morphing operations only.

Different from previous work, we investigate in this research the
network morphism from a higher level of viewpoint, and systematically
study the central problem of network morphism on the module level,
i.e., \emph{whether and how a convolutional layer can be morphed into
an arbitrary module}\footnote{Although network morphism generally does not impose constraints on
the architecture of the child network, in this work we limit the investigation
to the expanding mode.}, where a module refers to a single-source, single-sink acyclic subnet
of a neural network. With this modularized network morphing, instead
of morphing in the layer level where numerous variations exist in
a deep neural network, we focus on the changes of basic modules of
networks, and explore the morphing space in a more efficient way.
The necessities for this study are two folds. First, we wish to explore
the capability of the network morphism operations and obtain a theoretical
upper bound for what we are able to do with this learning scheme.
Second, modern state-of-the-art convolutional neural networks have
been developed with modularized architectures \cite{SzegedyLiuJiaEtAl2014,HeZhangRenEtAl2015},
which stack the construction units following the same module design.
It is highly desired that the morphing operations could be directly
applied to these networks.

To study the morphing capability of network morphism and figure out
the morphing process, we introduce a simplified graph-based representation
for a module. Thus, the network morphing process can be formulated
as a graph transformation process. In this representation, the module
of a neural network is abstracted as a directed acyclic graph (DAG),
with data blobs in the network represented as vertices and convolutional
layers as edges. Furthermore, a vertex with more than one outdegree
(or indegree) implicitly includes a split of multiple copies of blobs
(or a joint of addition). Indeed, the proposed graph abstraction suffers
from the problem of dimension compatibility of blobs, for different
kernel filters may result in totally different blob dimensions. We
solve this problem by extending the blob and filter dimensions from
finite to infinite, and the convergence properties will also be carefully
investigated.

Two atomic morphing operations are adopted as the basis for the proposed
graph transformation, based on which a large family of modules can
be transformed from a convolutional layer. This family of modules
are called simple morphable modules in this work. A novel algorithm
is proposed to identify the morphing steps by reducing the module
into a single convolutional layer. For any module outside the simple
morphable family, i.e., complex module, we first apply the same reduction
process and reduce it to an irreducible module. A practical algorithm
is then proposed to solve for the network morphism equation of the
irreducible module. Therefore, we not only verify the morphability
to an arbitrary module, but also provide a unified morphing solution.
This demonstrates the generalization ability and thus practicality
of this learning scheme.

Extensive experiments have been conducted based on ResNet \cite{HeZhangRenEtAl2015}
to show the effectiveness of the proposed morphing solution. With
only 1.2x or less computation, the morphed network can achieve up
to 25\% relative performance improvement over the original ResNet.
Such an improvement is significant in the sense that the morphed 20-layered
network is able to achieve an error rate of 6.60\% which is even better
than a 110-layered ResNet (6.61\%) on the CIFAR10 dataset, with only
around $\nicefrac{1}{5}$ of the computational cost. It is also exciting
that the morphed 56-layered network is able to achieve 5.37\% error
rate, which is even lower than those of ResNet-110 (6.61\%) and ResNet-164
(5.46\%). The effectiveness of the proposed learning scheme has also
been verified on the CIFAR100 and ImageNet datasets.

\section{Related Work}

\emph{Knowledge Transfer}. Network morphism originated from knowledge
transferring for convolutional neural networks. Early attempts were
only able to transfer partial knowledge of a well-trained network.
For example, a series of model compression techniques \cite{BuciluCaruanaNiculescu-Mizil2006,BaCaruana2014,HintonVinyalsDean2015,RomeroBallasKahouEtAl2014}
were proposed to fit a lighter network to predict the output of a
heavier network. Pre-training \cite{SimonyanZisserman2014} was adopted
to pre-initialize certain layers of a deeper network with weights
learned from a shallower network. However, network morphism requires
the knowledge being fully transferred, and existing work includes
Net2Net \cite{ChenGoodfellowShlens2015} and NetMorph \cite{WeiWangRuiEtAl2016}.
Net2Net achieved this goal by padding identity mapping layers into
the neural network, while NetMorph decomposed a convolutional layer
into two layers by deconvolution. Note that the network morphism operations
in \cite{ChenGoodfellowShlens2015,WeiWangRuiEtAl2016} are quite primitive
and at a micro-scale layer level. In this research, we study the network
morphism at a meso-scale module level, and in particular, we investigate
its morphing capability.

\emph{Modularized Network Architecture}. The evolution of convolutional
neural networks has been from sequential to modularized. For example,
LeNet \cite{LeCunBottouBengioEtAl1998}, AlexNet \cite{KrizhevskySutskeverHinton2012},
and VGG net \cite{SimonyanZisserman2014} are sequential networks,
and their difference is primarily on the number of layers, which is
5, 8, and up to 19 respectively. However, recently proposed networks,
such as GoogLeNet \cite{SzegedyLiuJiaEtAl2014,SzegedyVanhouckeIoffeEtAl2015}
and ResNet \cite{HeZhangRenEtAl2015}, follow a modularized architecture
design, and have achieved the state-of-the-art performance. This is
why we wish to study network morphism at the module level, so that
its operations are able to directly apply to these modularized network
architectures.

\section{Network Morphism via Graph Abstraction}

In this section, we present a systematic study on the capability of
network morphism learning scheme. We shall verify that a convolutional
layer is able to be morphed into any single-source, single-sink DAG
subnet, named as a module here. We shall also present the corresponding
morphing algorithms. 

For simplicity, we first consider convolutional neural networks with
only convolutional layers. All other layers, including the non-linearity
and batch normalization layers, will be discussed later in this paper. 

\begin{figure}
\begin{centering}
\subfloat[Network morphism in depth. \label{fig:netmorph}]{\begin{centering}
\includegraphics[height=1.3in]{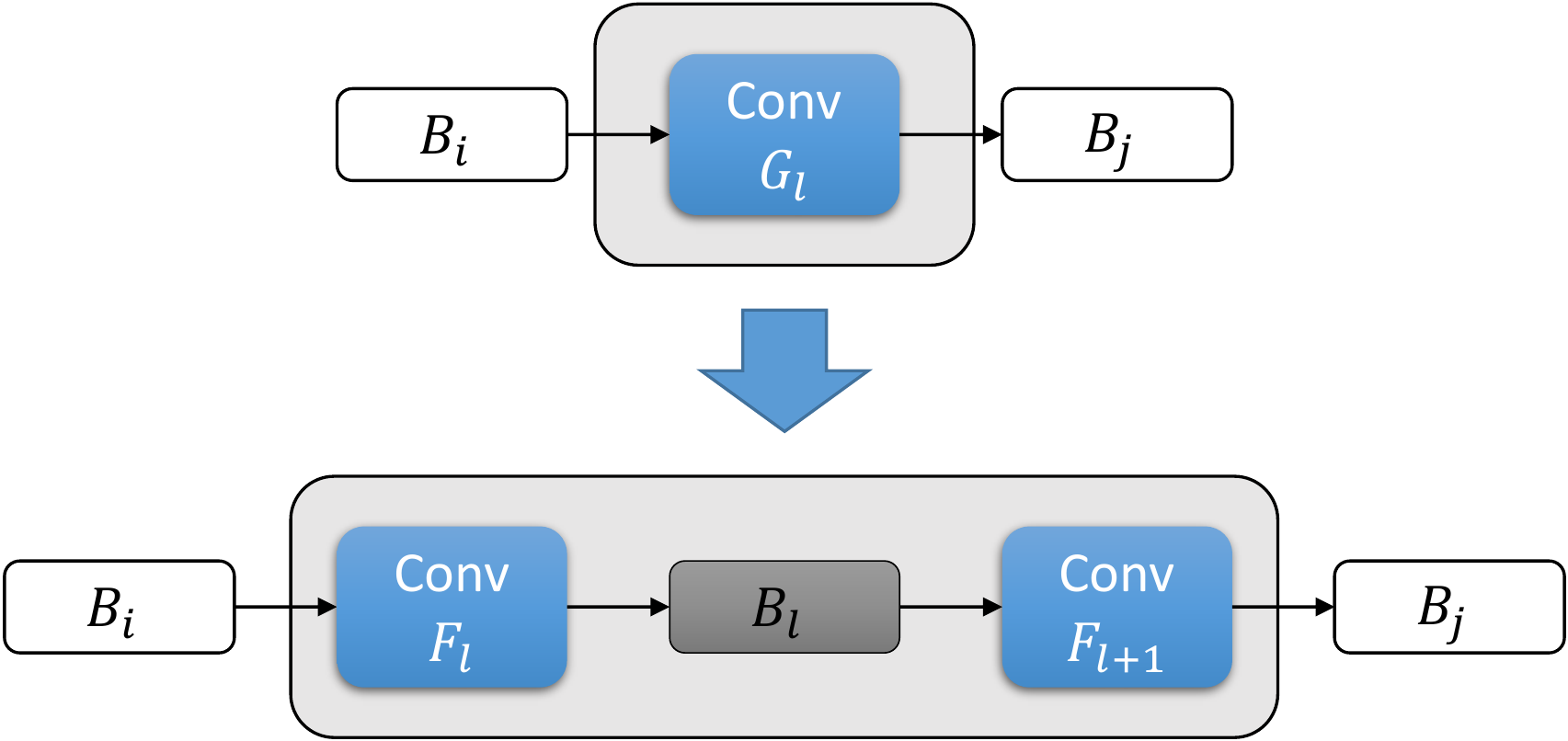} 
\par\end{centering}

}\quad{}\subfloat[Atomic morphing types. \label{fig:netmorph_atomic}]{\begin{centering}
\includegraphics[height=1.3in]{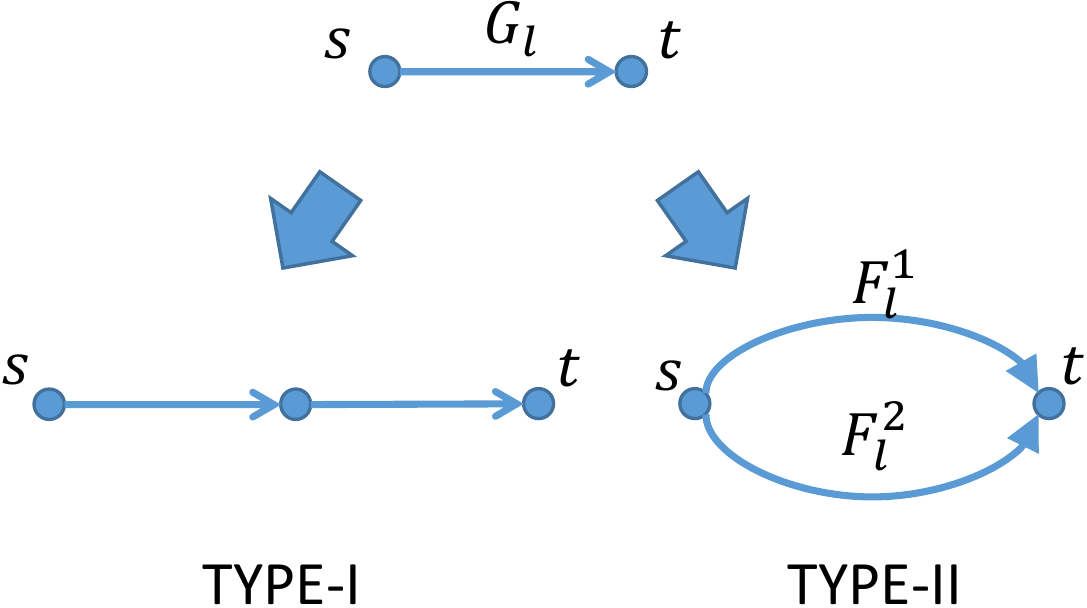} 
\par\end{centering}

}
\par\end{centering}

\caption{Illustration of atomic morphing types. (a) One convolutional layer
is morphed into two convolutional layers; (b) \texttt{TYPE-I} and
\texttt{TYPE-II} atomic morphing types.}
\end{figure}

\subsection{Background and Basic Notations}

For a 2D deep convolutional neural network (DCNN), as shown in Fig.
\ref{fig:netmorph}, the convolution is defined by: 
\begin{equation}
B_{j}(c_{j})=\sum_{c_{i}}B_{i}(c_{i})*G_{l}(c_{j},c_{i}),\label{eq:conv}
\end{equation}
where the blob $B_{*}$ is a 3D tensor of shape $(C_{*},H_{*},W_{*})$
and the convolutional filter $G_{l}$ is a 4D tensor of shape $(C_{j},C_{i},K_{l},K_{l}).$
In addition, $C_{*}$, $H_{*}$, and $W_{*}$ represent the number
of channels, height and width of $B_{*}$. $K_{l}$ is the convolutional
kernel size\footnote{Generally speaking, $G_{l}$ is a 4D tensor of shape $(C_{j},C_{i},K_{l}^{H},K_{l}^{W})$,
where convolutional kernel sizes for blob height and width are not
necessary to be the same. However, in order to simplify the notations,
we assume that $K_{l}^{H}=K_{l}^{W}$, but the claims and theorems
in this paper apply equally well when they are different.}.

In a network morphism process, the convolutional layer $G_{l}$ in
the parent network is morphed into two convolutional layers $F_{l}$
and $F_{l+1}$ (Fig. \ref{fig:netmorph}), where the filters $F_{l}$
and $F_{l+1}$ are 4D tensors of shapes $(C_{l},C_{i},K_{1},K_{1})$
and $(C_{j},C_{l},K_{2},K_{2})$. This process should follow the morphism
equation: 
\begin{equation}
\tilde{G_{l}}(c_{j},c_{i})=\sum_{c_{l}}F_{l}(c_{l},c_{i})*F_{l+1}(c_{j},c_{l}),\label{eq:morph}
\end{equation}
where $\tilde{G_{l}}$ is a zero-padded version of $G_{l}$ whose
effective kernel size is $\tilde{K_{l}}=K_{1}+K_{2}-1\geq K_{l}$.
\cite{WeiWangRuiEtAl2016} showed a sufficient condition for exact
network morphism: 
\begin{equation}
\max(C_{l}C_{i}K_{1}^{2},C_{j}C_{l}K_{2}^{2})\geq C_{j}C_{i}(K_{1}+K_{2}-1)^{2}.\label{eq:exact_condition}
\end{equation}

For simplicity, we shall denote equations (\ref{eq:conv}) and (\ref{eq:morph})
as $B_{j}=G_{l}\circledast B_{i}$ and $\tilde{G}_{l}=F_{l+1}\circledast F_{l}$,
where $\circledast$ is a non-communicative multi-channel convolution
operator. We can also rewrite equation (\ref{eq:exact_condition})
as $\max(|F_{l}|,|F_{l+1}|)\geq|\tilde{G_{l}}|$, where $|*|$ measures
the size of the convolutional filter.

\subsection{Atomic Network Morphism}

\label{sub:atomic}

We start with the simplest cases. Two \emph{atomic morphing types}
are considered, as shown in Fig. \ref{fig:netmorph_atomic}: 1) a
convolutional layer is morphed into two convolutional layers (TYPE-I);
2) a convolutional layer is morphed into two-way convolutional layers
(TYPE-II). For the TYPE-I atomic morphing operation, equation (\ref{eq:morph})
is satisfied, while For TYPE-II, the filter split is set to satisfy
\begin{equation}
G_{l}=F_{l}^{1}+F_{l}^{2}.\label{eq:morph2}
\end{equation}
In addition, for TYPE-II, at the source end, the blob is split with
multiple copies; while at the sink end, the blobs are joined by addition.

\subsection{Graph Abstraction}

To simplify the representation, we introduce the following graph abstraction
for network morphism. For a convolutional neural network, we are able
to abstract it as a graph, with the blobs represented by vertices,
and convolutional layers by edges. Formally, a DCNN is represented
as a DAG $M=(V,E)$, where $V=\{B_{i}\}_{i=1}^{N}$ are blobs and
$E=\{e_{l}=(B_{i},B_{j})\}_{l=1}^{L}$are convolutional layers. Each
convolutional layer $e_{l}$ connects two blobs $B_{i}$ and $B_{j}$,
and is associated with a convolutional filter $F_{l}$. Furthermore,
in this graph, if $outdegree(B_{i})>1$, it implicitly means a split
of multiple copies; and if $indegree(B_{i})>1$, it is a joint of
addition.

Based on this abstraction, we formally introduce the following definition
for \emph{modular network morphism}: 
\begin{defn}
Let $M_{0}=(\{s,t\},e_{0})$ represent the graph with only a single
edge $e_{0}$ that connects the source vertex $s$ and sink vertex
$t$. $M=(V,E)$ represents any single-source, single-sink DAG with
the same source vertex $s$ and the same sink vertex $t$. We call
such an $M$ as a \emph{module}. If there exists a process that we
are able to morph $M_{0}$ to $M$, then we say that module $M$ is
\emph{morphable}, and the morphing process is called \emph{modular
network morphism}. 
\end{defn}
Hence, based on this abstraction, modular network morphism can be
represented as a graph transformation problem. As shown in Fig. \ref{fig:morph_process},
module (C) in Fig. \ref{fig:module_sample} can be transformed from
module $M_{0}$ by applying the illustrated network morphism operations.

For each modular network morphing, a modular network morphism equation
is associated: 
\begin{defn}
Each module essentially corresponds to a function from $s$ to $t$,
which is called a \emph{module function}. For a modular network morphism
process from $M_{0}$ to $M$, the equation that guarantees the module
function unchanged is called \emph{modular network morphism equation}. 
\end{defn}
It is obvious that equations (\ref{eq:morph}) and (\ref{eq:morph2})
are the modular network morphism equations for TYPE-I and TYPE-II
atomic morphing types. In general, the modular network morphism equation
for a module $M$ is able to be written as the sum of all convolutional
filter compositions, in which each composition is actually a path
from $s$ to $t$ in the module $M$. Let $\{(F_{p,1},F_{p,2},\cdots,F_{p,i_{p}}):\ p=1,\cdots,P,\text{ and }i_{p}\text{ is the length of path }p\}$
be the set of all such paths represented by the convolutional filters.
Then the modular network morphism equation for module $M$ can be
written as 
\begin{equation}
G_{l}=\sum_{p}F_{p,i_{p}}\circledast F_{p,i_{p}-1}\circledast\cdots\circledast F_{p,1}.
\end{equation}
As an example illustrated in Fig. \ref{fig:module_sample}, there
are four paths in module (D), and its modular network morphism equation
can be written as 
\begin{equation}
G_{l}=F_{5}\circledast F_{1}+F_{6}\circledast(F_{3}\circledast F_{1}+F_{4}\circledast F_{2})+F_{7}\circledast F_{2},
\end{equation}
where $G_{l}$ is the convolutional filter associated with $e_{0}$
in module $M_{0}$.

\subsection{The Compatibility of Network Morphism Equation}

One difficulty in this graph abstraction is in the dimensional compatibility
of convolutional filters or blobs. For example, for the TYPE-II atomic
morphing in Fig. \ref{fig:netmorph_atomic}, we have to satisfy $G_{l}=F_{l}^{1}+F_{l}^{2}$.
Suppose that $G_{l}$ and $F_{l}^{2}$ are of shape $(64,64,3,3),$
while $F_{l}^{1}$ is $(64,64,1,1)$, they are actually not addable.
Formally, we define the compatibility of modular network morphism
equation as follows: 
\begin{defn}
The modular network morphism equation for a module $M$ is \emph{compatible}
if and only if the mathematical operators between the convolutional
filters involved in this equation are well-defined. 
\end{defn}
In order to solve this compatibility problem, we need not to assume
that blobs $\{B_{i}\}$ and filters $\{F_{l}\}$ are finite dimension
tensors. Instead they are considered as infinite dimension tensors
defined with a finite support\footnote{A support of a function is defined as the set of points where the
function value is non-zero, i.e., $support(f)=\{x|f(x)\neq0\}$.}, and we call this as an extended definition. An instant advantage
when we adopt this extended definition is that we will no longer need
to differentiate $G_{l}$ and $\tilde{G_{l}}$ in equation (\ref{eq:morph}),
since $\tilde{G_{l}}$ is simply a zero-padded version of $G_{l}$. 
\begin{lem}
The operations $+$ and $\circledast$ are well-defined for the modular
network morphism equation. Namely, if $F^{1}$ and $F^{2}$ are infinite
dimension 4D tensors with finite support, let $G=F^{1}+F^{2}$ and
$H=F^{2}\circledast F^{1}$, then both $G$ and $H$ are uniquely
defined and also have finite support.\end{lem}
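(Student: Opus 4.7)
The plan is to treat the two operations independently and for each to verify both pointwise well-definedness and finiteness of the support. The only resource needed is the finite-support hypothesis on $F^{1}$ and $F^{2}$, which will ensure that every formal infinite sum that appears collapses to a finite sum of real numbers, and that every support set one produces is contained in a finite set built from the supports of the inputs.

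For the addition $G = F^{1} + F^{2}$, I would take the pointwise definition
\[
G(c_{j},c_{i},x,y) := F^{1}(c_{j},c_{i},x,y) + F^{2}(c_{j},c_{i},x,y),
\]
which at each coordinate is a sum of two reals and hence unambiguous. If $(c_{j},c_{i},x,y)$ lies outside $\mathrm{supp}(F^{1})\cup\mathrm{supp}(F^{2})$, both summands vanish, so $\mathrm{supp}(G)\subseteq \mathrm{supp}(F^{1})\cup\mathrm{supp}(F^{2))}$, a finite union of finite sets. This part is essentially immediate.

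For the convolution $H = F^{2}\circledast F^{1}$, I would extend equations (\ref{eq:conv}) and (\ref{eq:morph}) to infinite integer index ranges and write
\[
H(c_{k},c_{i},x,y) = \sum_{c_{j}}\sum_{x',y'} F^{2}(c_{k},c_{j},x-x',y-y')\, F^{1}(c_{j},c_{i},x',y').
\]
For a fixed output coordinate, a summand can be nonzero only when $(c_{k},c_{j},x-x',y-y')\in\mathrm{supp}(F^{2})$ and $(c_{j},c_{i},x',y')\in\mathrm{supp}(F^{1})$ simultaneously. By finiteness of both supports, only finitely many triples $(c_{j},x',y')$ satisfy both constraints, so the defining sum reduces to a finite sum of reals and $H$ is uniquely determined pointwise. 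For the support, the same observation shows that $c_{k}$ and $c_{i}$ must lie in the (finite) channel projections of $\mathrm{supp}(F^{2})$ and $\mathrm{supp}(F^{1})$ respectively, while $(x,y)$ must lie in the Minkowski sum of the two spatial projections; all of these are finite, so $\mathrm{supp}(H)$ is finite.

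I do not expect a genuine obstacle here. The only point that requires some care is the bookkeeping in the convolution step, where one must insist that the dummy channel $c_{j}$ be constrained simultaneously by \emph{both} supports (not just one) in order for the truncation to a finite sum to go through. Writing this cleanly for two factors is worthwhile because the same pattern will be reused every time one composes a longer chain of convolutions along a path in a module, which is exactly the situation that the later modular morphism equation will demand.
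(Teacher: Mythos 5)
Your proposal is correct and follows essentially the same route as the paper's own (sketched) proof: addition is handled pointwise with support contained in the union, and for $\circledast$ the infinite sum over the intermediate channel index collapses to a finite sum because both factors have finite support, whence $H$ is uniquely defined with finite support. You simply supply the bookkeeping (channel projections and the spatial Minkowski sum) that the paper leaves implicit, which is a faithful completion rather than a different argument.
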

\begin{proof}[Sketch of Proof]
It is quite obvious that this lemma holds for the operator $+$.
For the operator $\circledast$, if we have this extended definition,
the sum in equation (\ref{eq:morph}) will become infinite over the
index $c_{l}$. It is straightforward to show that this infinite sum
converges, and also that $H$ is finitely supported with respect to
the indices $c_{j}$ and $c_{i}$. Hence $H$ has finite support. 
\end{proof}
As a corollary, we have: 
\begin{cor}
The modular network morphism equation for any module $M$ is always
compatible if the filters involved in $M$ are considered as infinite
dimension tensors with finite support. 
\end{cor}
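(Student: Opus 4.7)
The plan is to deduce the corollary directly from the preceding lemma by structural induction on the modular network morphism equation. Recall that for any module $M$ the equation takes the form
\begin{equation*}
G_{l} \;=\; \sum_{p=1}^{P} F_{p,i_{p}}\circledast F_{p,i_{p}-1}\circledast\cdots\circledast F_{p,1},
\end{equation*}
where the inner compositions run along the (finitely many) source-to-sink paths of $M$ and the outer sum collects their contributions. Once every filter $F_{p,k}$ is reinterpreted as an infinite-dimensional 4D tensor with finite support, compatibility amounts to checking that each $\circledast$ and each $+$ occurring on the right-hand side is a legitimate operation in the sense of the lemma.

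First I would handle a single path. Fix a path $p$ of length $i_{p}$ and proceed by induction on the number of $\circledast$ operations. The base case $i_{p}=1$ is trivial, since a single filter is already a finitely supported tensor. For the inductive step, assume the partial composition $H_{p,k}=F_{p,k}\circledast\cdots\circledast F_{p,1}$ is well-defined and finitely supported; then the lemma applied to $F_{p,k+1}$ and $H_{p,k}$ yields that $H_{p,k+1}=F_{p,k+1}\circledast H_{p,k}$ is also well-defined and finitely supported. Iterating through the path shows each full composition $H_{p}=F_{p,i_{p}}\circledast\cdots\circledast F_{p,1}$ lies in the class of infinite-dimensional tensors with finite support.

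Second, I would handle the outer sum. Having established that each path-composition $H_{p}$ is a finitely supported infinite-dimensional tensor, the $+$ half of the lemma (which is immediate) gives that any finite sum $\sum_{p=1}^{P} H_{p}$ is again finitely supported and uniquely determined. Since $M$ has a finite number of source-to-sink paths, the whole right-hand side defining $G_{l}$ is thus well-defined, and every operator appearing in the equation is legitimate; this is exactly the compatibility requirement in the definition. The only real content is the $\circledast$ half of the lemma, and since that is already proved, the argument here is a straightforward structural recursion. The main obstacle I anticipate is purely notational: making sure that the inductive hypothesis is stated at the right level (tensors with finite support, closed under both operations) so that the outer sum and the inner compositions can be treated uniformly without re-examining convergence at every step.
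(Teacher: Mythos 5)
Your proposal is correct and follows essentially the same route as the paper, which states the corollary as an immediate consequence of the lemma (closure of finitely supported tensors under $+$ and $\circledast$), leaving the finite induction over path compositions and the finite outer sum implicit. You have simply spelled out that straightforward structural recursion, which is exactly the intended argument.
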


\subsection{Simple Morphable Modules}

\begin{figure}
\begin{centering}
\subfloat[Example modules. \label{fig:module_sample}]{\begin{centering}
\includegraphics[width=0.32\linewidth]{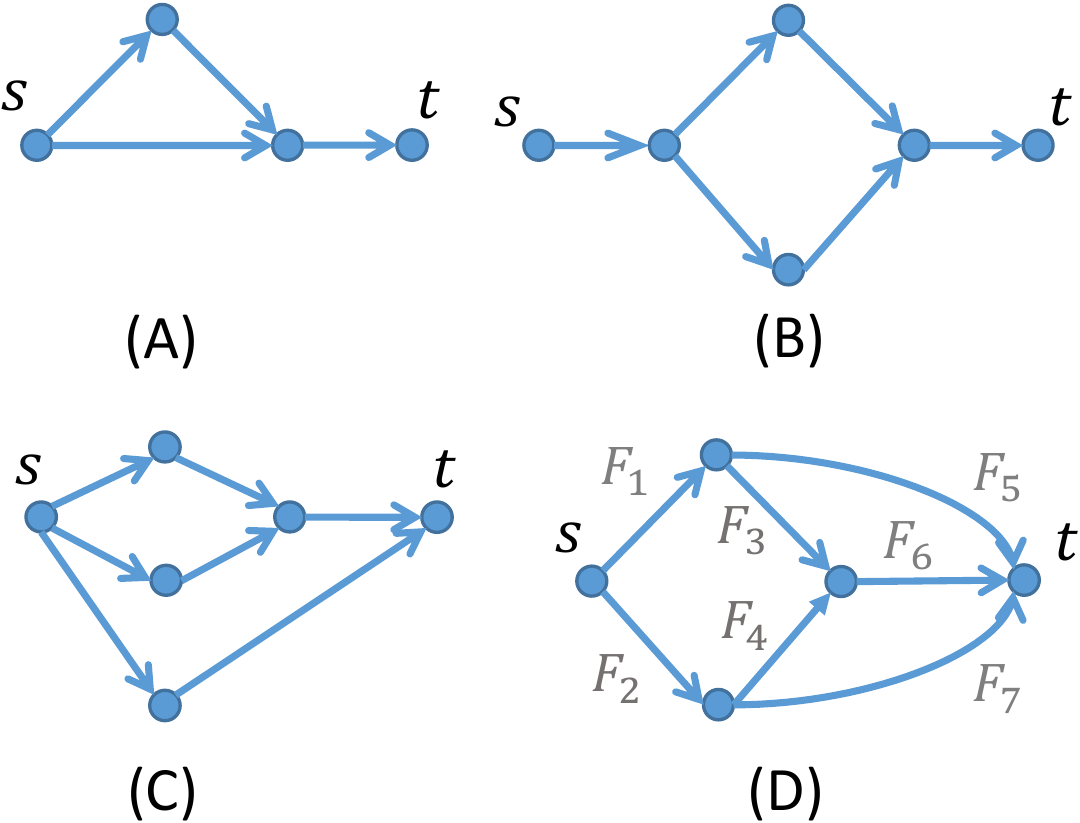} \hspace*{0.15in} 
\par\end{centering}

}\subfloat[Morphing process for module (C) and (D). \label{fig:morph_process}]{\begin{centering}
\includegraphics[width=0.57\linewidth]{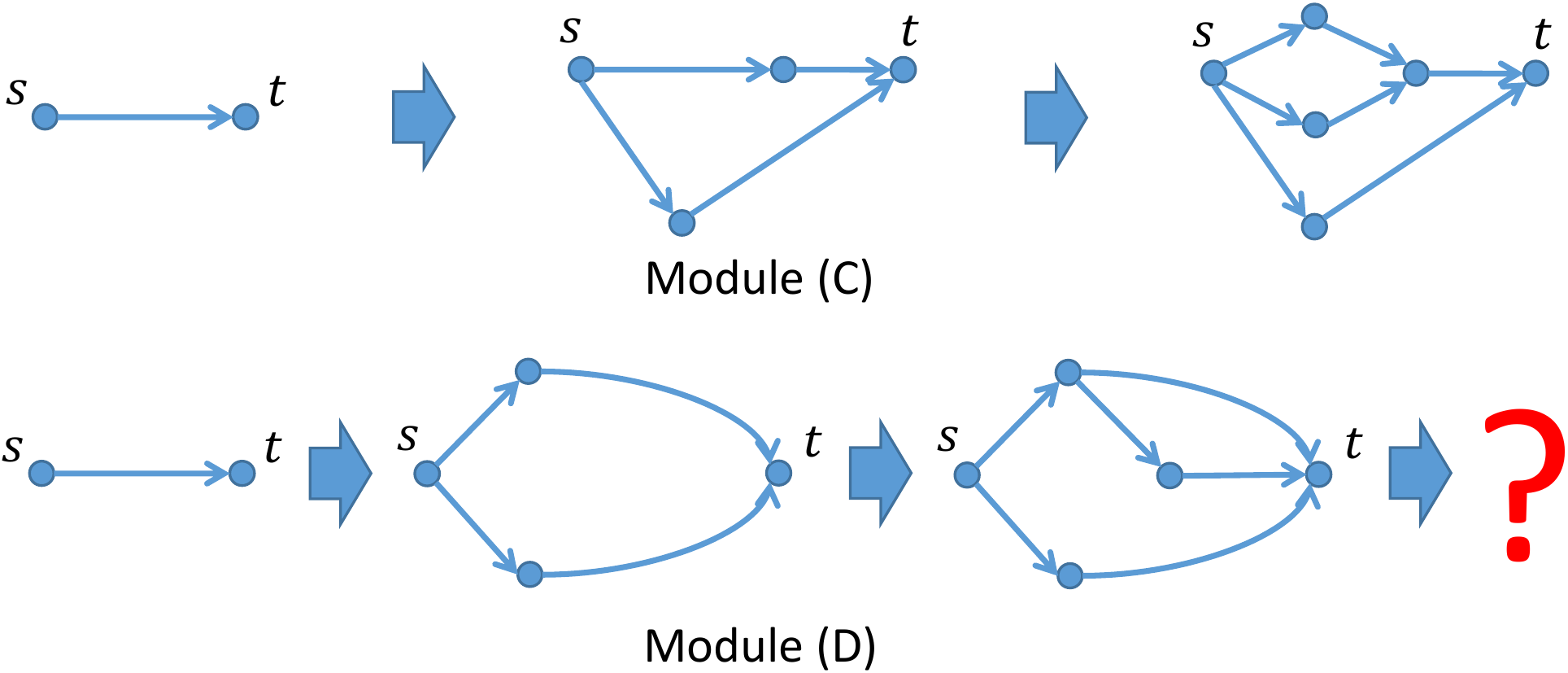} 
\par\end{centering}

}
\par\end{centering}

\caption{Example modules and morphing processes. (a) Modules (A)-(C) are simple
morphable, while (D) is not; (b) a morphing process for module (C),
while for module (D), we are not able to find such a process. }
\end{figure}

\label{sub:simple_morphable}

In this section, we introduce a large family of modules, i.e, simple
morphable modules, and then provide their morphing solutions. We first
introduce the following definition: 
\begin{defn}
A module $M$ is \emph{simple morphable} if and only if it is able
to be morphed with only combinations of atomic morphing operations. 
\end{defn}
Several example modules are shown in Fig. \ref{fig:module_sample}.
It is obvious that modules (A)-(C) are simple morphable, and the morphing
process for module (C) is also illustrated in Fig. \ref{fig:morph_process}.

For a simple morphable module $M$, we are able to identity a morphing
sequence from $M_{0}$ to $M$. The algorithm is illustrated in Algorithm
\ref{alg:simple}. The core idea is to use the reverse operations
of atomic morphing types to reduce $M$ to $M_{0}$. Hence, the morphing
process is just the reverse of the reduction process. In Algorithm
\ref{alg:simple}, we use a four-element tuple $(M,e_{1},\{e_{2},e_{3}\},type)$
to represent the process of morphing edge $e_{1}$ in module $M$
to $\{e_{2},e_{3}\}$ using TYPE-\texttt{\textsc{<type>}} atomic operation.
Two auxiliary functions \textsc{CheckTypeI} and \textsc{CheckTypeII}
are further introduced. Both of them return either \texttt{\textsc{False}}
if there is no such atomic sub-module in $M$, or a morphing tuple
$(M,e_{1},\{e_{2},e_{3}\},type)$ if there is. The algorithm of \textsc{CheckTypeI}
only needs to find a vertex satisfying $indegree(B_{i})=outdegree(B_{i})=1$,
while \textsc{CheckTypeII} looks for the matrix elements $>1$ in
the adjacent matrix representation of module $M$.

Is there a module not simple morphable? The answer is yes, and an
example is the module (D) in Fig. \ref{fig:module_sample}. A simple
try does not work as shown in Fig. \ref{fig:morph_process}. In fact,
we have the following proposition: 
\begin{prop}
Module (D) in Fig. \ref{fig:module_sample} is not simple morphable\label{prop:non_simple}.\end{prop}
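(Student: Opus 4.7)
The plan is to observe that each of the two atomic morphing operations is invertible as a pure graph transformation, so that simple morphability of a module $M$ is equivalent to $M$ being reducible to $M_{0}$ by the corresponding reverse operations: reverse TYPE-I contracts any interior vertex $v$ with $\operatorname{indeg}(v)=\operatorname{outdeg}(v)=1$ and merges its two incident edges, while reverse TYPE-II merges any pair of parallel edges between the same ordered pair of vertices. Reading a putative forward sequence $M_{0}\to N_{1}\to\cdots\to M$ in reverse would yield exactly such a reduction, so to prove that module (D) is not simple morphable it suffices to show that no sequence of reverse operations starting at (D) can reach $M_{0}$.

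My next step would be a direct finite check on (D). Labelling the two vertices immediately downstream of $s$ as $B_{1}$ (carrying $F_{1},F_{3},F_{5}$) and $B_{2}$ (carrying $F_{2},F_{4},F_{7}$), and the merge vertex that feeds $t$ through $F_{6}$ as $B_{3}$, I would tabulate the degrees and observe that $s$ has indegree $0$ and outdegree $2$, that $B_{1}$ and $B_{2}$ each have indegree $1$ and outdegree $2$, that $B_{3}$ has indegree $2$ and outdegree $1$, and that $t$ has indegree $3$ and outdegree $0$. Thus no vertex of (D) satisfies $\operatorname{indeg}=\operatorname{outdeg}=1$, so reverse TYPE-I is nowhere applicable. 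Likewise each of the seven edges $F_{1},\ldots,F_{7}$ joins a distinct ordered pair of vertices, so the adjacency matrix of (D) is $\{0,1\}$-valued and reverse TYPE-II is nowhere applicable either.

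Since neither reverse operation can be performed even once at (D), any attempted reduction is already stuck at the first step, and (D) is plainly not equal to $M_{0}$, which has a single edge. By the equivalence set up in the first paragraph this rules out every forward morphing sequence from $M_{0}$ to (D), giving the proposition. The one non-routine point, which I see as the main obstacle, is the invertibility claim underlying the first paragraph: one has to verify that every forward atomic step in Section~\ref{sub:atomic} is literally the graph-level inverse of a well-defined reverse step, with no side condition on the convolutional filters that could be silently violated when the step is run backwards. This is immediate from the purely combinatorial descriptions of TYPE-I and TYPE-II, so the obstruction to morphing (D) is entirely graph-theoretic.
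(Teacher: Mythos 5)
Your proposal is correct and follows essentially the same route as the paper's own (sketched) proof: simple morphability would force module (D) itself to admit a reverse TYPE-I step (a vertex with indegree $=$ outdegree $=1$) or a reverse TYPE-II step (parallel edges), and your degree/adjacency tabulation is just the explicit verification that both checks fail, which is what the paper asserts via \textsc{CheckTypeI} and \textsc{CheckTypeII} returning \textsc{False}. Your added justification of the reversibility of the atomic operations merely fills in the detail the paper states as ``a simple morphable module is always able to be reverted back to $M_{0}$,'' so no genuinely different argument is involved.
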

\begin{proof}[Sketch of Proof]
A simple morphable module $M$ is always able to be reverted back
to $M_{0}$. However, for module (D) in Fig. \ref{fig:module_sample},
both \textsc{CheckTypeI} and \textsc{CheckTypeII} return \textsc{False}. 
\end{proof}

\subsection{Modular Network Morphism Theorem}

\begin{algorithm}[tb]
\caption{Algorithm for Simple Morphable Modules\label{alg:simple}}
\begin{algorithmic}

\State \textbf{Input:} $M_{0}$; a simple morphable module $M$

\State \textbf{Output: }The morphing sequence $Q$ that morphs $M_{0}$
to $M$ using atomic morphing operations

\State $Q=\emptyset$

\While{$M\neq M_{0}$}

\While{\textsc{CheckTypeI}($M$) is not \textsc{False}}

\State // Let $(M_{temp},e_{1},\{e_{2},e_{3}\},type)$ be the return
value of \textsc{CheckTypeI($M$)}

\State $Q.prepend((M_{temp},e_{1},\{e_{2},e_{3}\},type))$ and $M\leftarrow M_{temp}$

\EndWhile

\While{\textsc{CheckTypeII}($M$) is not \textsc{False}}

\State // Let $(M_{temp},e_{1},\{e_{2},e_{3}\},type)$ be the return
value of \textsc{CheckTypeII($M$)}

\State $Q.prepend((M_{temp},e_{1},\{e_{2},e_{3}\},type))$ and $M\leftarrow M_{temp}$

\EndWhile

\EndWhile

\end{algorithmic} 
\end{algorithm}

\begin{algorithm}[tb]
\caption{Algorithm for Irreducible Modules\label{alg:non_simple}}
\begin{algorithmic}

\State \textbf{Input:} $G_{l}$; an irreducible module $M$

\State \textbf{Output: }Convolutional filters $\{F_{i}\}_{i=1}^{n}$
of $M$

\State Initialize $\{F_{i}\}_{i=1}^{n}$ with random noise.

\State Calculate the effective kernel size of $M$, expand $G_{l}$
to $\tilde{G_{l}}$ by padding zeros.

\Repeat

\For{ $j=1$ to $n$}

\State Fix $\{F_{i}:i\neq j\}$, and calculate $F_{j}=deconv(\tilde{G_{l}},\{F_{i}:i\neq j\})$

\State Calculate loss $l=\|\tilde{G_{l}}-conv(\{F_{i}\}_{i=1}^{n})\|^{2}$

\EndFor

\Until{$l=0$ or $maxIter$ is reached}

\end{algorithmic} 
\end{algorithm}

For a module that is not simple morphable, which is called a \emph{complex
module}, we are able to apply Algorithm \ref{alg:simple} to reduce
it to an irreducible module $M$ first. For $M$, we propose Algorithm
\ref{alg:non_simple} to solve the modular network morphism equation.
The core idea of this algorithm is that, if only one convolutional
filter is allowed to change with all others fixed, the modular network
morphism equation will reduce to a linear system. The following argument
guarantees the correctness of Algorithm \ref{alg:non_simple}. 
\begin{proof}[Correctness of Algorithm \ref{alg:non_simple}]
Let $G_{l}$ and $\{F_{i}\}_{i=1}^{n}$ be the convolutional filter(s)
associated with $M_{0}$ and $M$. We further assume that one of $\{F_{i}\}$,
e.g., $F_{j}$, is larger or equal to $\tilde{G_{l}}$, where $\tilde{G_{l}}$
is the zero-padded version of $G_{l}$ (this assumption is a strong
condition in the expanding mode). The module network morphism equation
for $M$ can be written as 
\begin{equation}
\tilde{G_{l}}=C_{1}\circledast F_{j}\circledast C_{2}+C_{3},\label{eq:morph_thm}
\end{equation}
where $C_{1}$, $C_{2}$, and $C_{3}$ are composed of other filters
$\{F_{i}:i\neq j\}$. It can be checked that equation (\ref{eq:morph_thm})
is a linear system with $|\tilde{G_{l}}|$ constraints and $|F_{j}|$
free variables. Since we have $|F_{j}|\geq|\tilde{G_{l}}|$, the system
is non-deterministic and hence solvable as random matrices are rarely
inconsistent. 
\end{proof}
For a general module $M$, whether simple morphable or not, we apply
Algorithm \ref{alg:simple} to reduce $M$ to an irreducible module
$M'$, and then apply Algorithm \ref{alg:non_simple} to $M'$. Hence
we have the following theorem: 
\begin{thm}
A convolutional layer can be morphed to any module (any single-source,
single-sink DAG subnet).\label{thm:modular_netmorph} 
\end{thm}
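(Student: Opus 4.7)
The plan is to combine the two algorithms already developed in the excerpt and show that their composition always yields a valid morphing sequence from $M_{0}$ to an arbitrary module $M$. Given any single-source, single-sink DAG $M$, I would first run Algorithm \ref{alg:simple} on $M$. This algorithm repeatedly looks for a TYPE-I or TYPE-II atomic pattern and contracts it, producing at each step a strictly smaller module (each atomic reduction removes one edge and possibly one vertex). Termination is therefore automatic in at most $|E|$ steps, yielding either $M_{0}$ itself, in which case $M$ is simple morphable by Definition, or a module $M'$ that contains neither a TYPE-I nor a TYPE-II sub-pattern, i.e.\ an irreducible module.

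In the first case, reversing the tuple sequence $Q$ produced by Algorithm \ref{alg:simple} directly gives a morphing of $M_{0}$ into $M$ using only atomic TYPE-I and TYPE-II operations, and each such operation preserves the module function by equations (\ref{eq:morph}) and (\ref{eq:morph2}). In the second case, I would split the morphing into two stages: first morph $M_{0}$ into the irreducible $M'$ by invoking Algorithm \ref{alg:non_simple}, which, by the argument stated just before the theorem, solves the modular network morphism equation associated with $M'$; then morph $M'$ into $M$ by applying the reversed tuple sequence from Algorithm \ref{alg:simple} as in the simple case. The concatenation of these two stages is the required morphing sequence.

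Two points require care and are where I expect the real work to lie. The first is ensuring that Algorithm \ref{alg:non_simple} is always applicable, which relies on the assumption that one filter $F_{j}$ in $M'$ can be chosen with $|F_{j}|\geq|\tilde{G_{l}}|$. In the expanding mode this is free, because we are permitted to enlarge channel counts and kernel sizes of the filters in $M'$; combined with Lemma~1 and its corollary, compatibility of the morphism equation is guaranteed once we view filters as infinite-dimensional tensors with finite support. The second is to verify that the linearized system obtained by fixing all $F_{i}$ with $i\neq j$ is consistent: the equation takes the form $\tilde{G_{l}} = C_{1}\circledast F_{j}\circledast C_{2}+C_{3}$, which is linear in $F_{j}$ with $|\tilde{G_{l}}|$ equations and $|F_{j}|\geq|\tilde{G_{l}}|$ unknowns, so for generic random initialization of the remaining filters the system is under-determined and solvable.

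The main obstacle, and the only non-routine piece, is the second point above: showing that the alternating deconvolution step in Algorithm \ref{alg:non_simple} actually drives the residual loss to zero (or, equivalently, that the linearized subproblem in $F_{j}$ is consistent under a generic choice of the other filters). I would handle this by arguing that the coefficient operator of the linear system, built from convolutions with $C_{1}$ and $C_{2}$, has full row rank almost surely when the other filters are drawn from a continuous distribution, since the set of rank-deficient configurations is a proper algebraic subvariety of measure zero. Combined with the termination of Algorithm \ref{alg:simple} and the preservation of module function at every atomic step, this establishes that an arbitrary module is reachable from a single convolutional layer, completing the proof of Theorem \ref{thm:modular_netmorph}.
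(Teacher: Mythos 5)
Your proposal is correct and follows essentially the same route as the paper: reduce $M$ by the reverse atomic operations of Algorithm \ref{alg:simple} to either $M_{0}$ or an irreducible core $M'$, handle $M'$ by solving the linearized morphism equation $\tilde{G_{l}}=C_{1}\circledast F_{j}\circledast C_{2}+C_{3}$ of Algorithm \ref{alg:non_simple} under the expanding-mode assumption $|F_{j}|\geq|\tilde{G_{l}}|$, and then replay the reversed atomic sequence. Your added remarks on termination and on generic full row rank only make explicit what the paper asserts informally (``random matrices are rarely inconsistent''), so no substantive difference remains.
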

This theorem answers the core question of network morphism, and provides
a theoretical upper bound for the capability of this learning scheme.

\subsection{Non-linearity and Batch Normalization in Modular Network Morphism}

\begin{figure}
\begin{centering}
\includegraphics[width=1\linewidth]{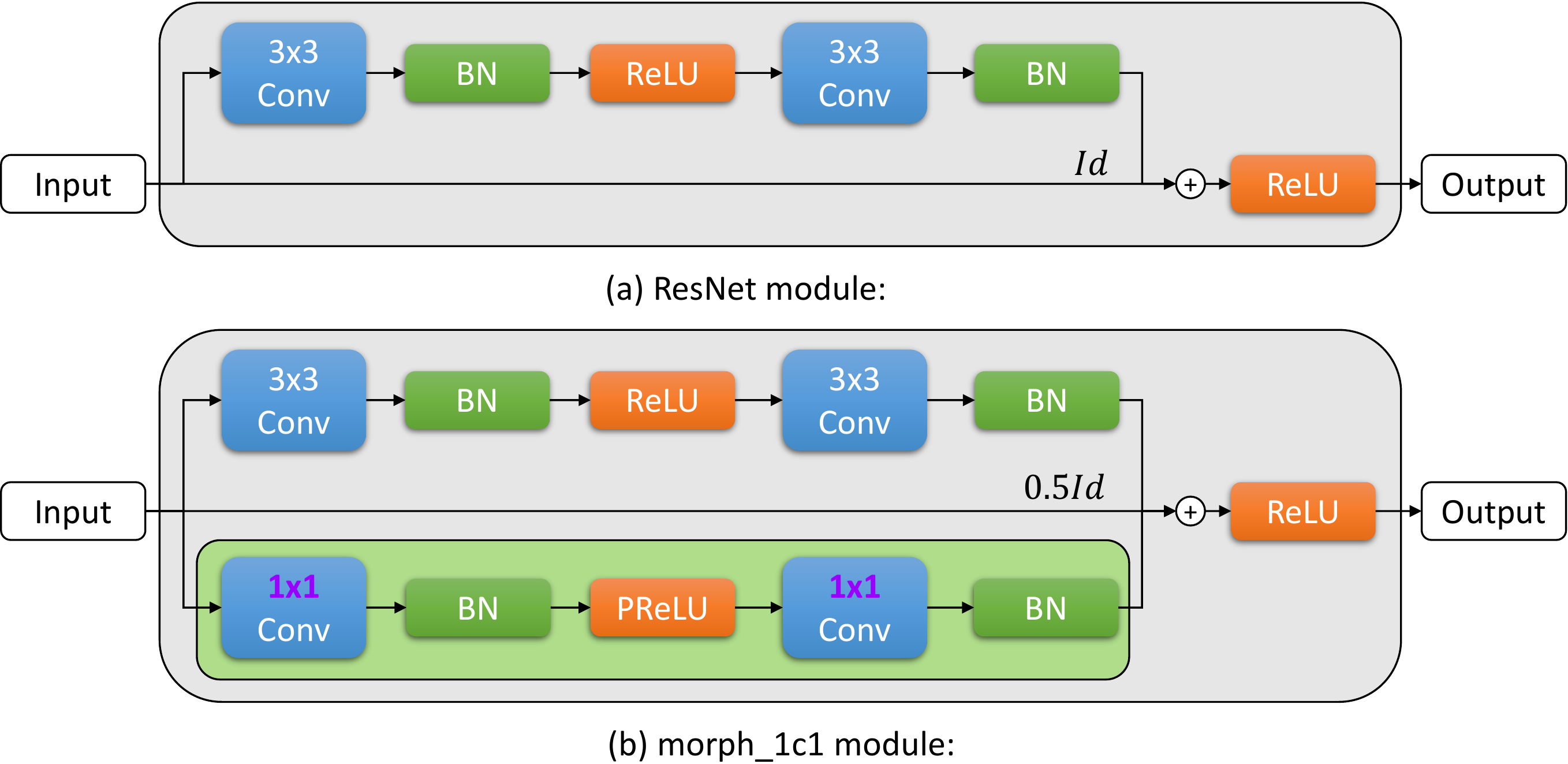} 
\par\end{centering}

\caption{Detailed architectures of the ResNet module and the \texttt{morph\_1c1}
module. \label{fig:resnet_morph}}
\end{figure}

\begin{figure}
\begin{centering}
\includegraphics[width=1\linewidth]{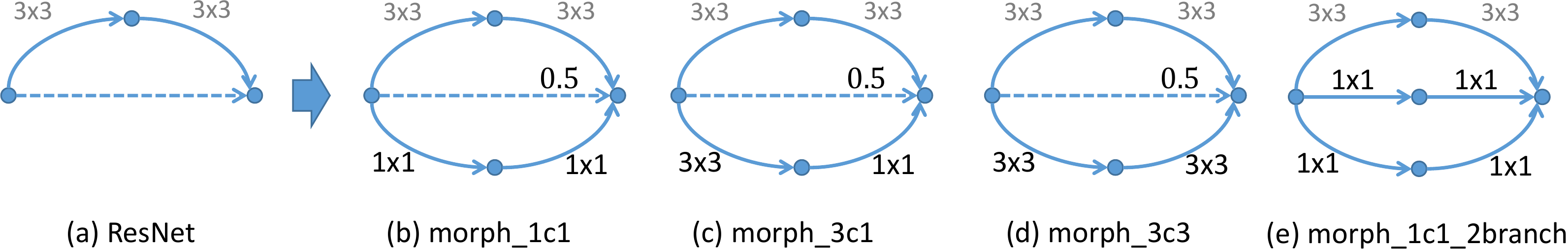} 
\par\end{centering}

\caption{Sample modules adopted in the proposed experiments. (a) and (b) are
the graph abstractions of modules illustrated in Fig. \ref{fig:resnet_morph}(a)
and (b). \label{fig:cifar_arch}}
\end{figure}

Besides the convolutional layers, a neural network module typically
also involves non-linearity layers and batch normalization layers,
as illustrated in Fig. \ref{fig:resnet_morph}. In this section, we
shall describe how do we handle these layers for modular network morphism.

For the non-linear activation layers, we adopt the solution proposed
in \cite{WeiWangRuiEtAl2016}. Instead of directly applying the non-linear
activations, we are using their parametric forms. Let $\varphi$ be
any non-linear activation function, and its parametric form is defined
to be 
\begin{equation}
P\text{-}\varphi=\{\varphi^{a}\}|_{a\in[0,1]}=\{(1-a)\cdot\varphi+a\varphi_{id}\}|_{a\in[0,1]}.
\end{equation}
The shapes of the parametric form of the non-linear activation $\varphi$
is controlled by the parameter $a$. When $a$ is initialized ($a=1$),
the parametric form is equivalent to an identity function, and when
the value of $a$ has been learned, the parametric form will become
a non-linear activation. In Fig. \ref{fig:resnet_morph}b, the non-linear
activation for the morphing process is annotated as PReLU to differentiate
itself with the other ReLU activations. In the proposed experiments,
for simplicity, all ReLUs are replaced with PReLUs.

The batch normalization layers \cite{IoffeSzegedy2015} can be represented
as 
\begin{equation}
newdata=\frac{data-mean}{\sqrt{var+eps}}\cdot gamma+beta.
\end{equation}
It is obvious that if we set $gamma=\sqrt{var+eps}$ and $beta=mean$,
then a batch normalization layer is reduced to an identity mapping
layer, and hence it can be inserted anywhere in the network. Although
it is possible to calculate the values of $gamma$ and $beta$ from
the training data, in this research, we adopt another simpler approach
by setting $gamma=1$ and $beta=0$. In fact, the value of $gamma$
can be set to any nonzero number, since the scale is then normalized
by the latter batch normalization layer (lower right one in Fig. \ref{fig:resnet_morph}b).
Mathematically and strictly speaking, when we set $gamma=0$, the
network function is actually changed. However, since the morphed filters
for the convolutional layers are roughly randomized, even though the
$mean$ of $data$ is not strictly zero, it is still approximately
zero. Plus with the fact that the data is then normalized by the latter
batch normalization layer, such small perturbation for the network
function change can be neglected. In the proposed experiments, only
statistical variances in performance are observed for the morphed
network when we adopt setting $gamma$ to zero. The reason we prefer
such an approach to using the training data is that it is easier to
implement and also yields slightly better results when we continue
to train the morphed network.

\section{Experimental Results}

\begin{table}
\caption{Experimental results of networks morphed from ResNet-20, ResNet-56,
and ResNet-110 on the CIFAR10 dataset. Results annotated with $^{\dagger}$
are from \cite{HeZhangRenEtAl2015}. \label{tab:results}}

\centering{}{\hspace*{-0.25in}\scriptsize%
\begin{tabular}{|c|c|c|c|c|c|c|c|c|}
\hline 
\multirow{2}{*}{Net Arch.} & {\scriptsize{}Intermediate } & \multirow{2}{*}{Error} & {\scriptsize{}Abs. Perf.} & {\scriptsize{}Rel. Perf.} & {\scriptsize{}\#Params.} & {\scriptsize{}\#Params.} & \multirow{2}{*}{{\scriptsize{}FLOP (million)}} & \multirow{2}{*}{{\scriptsize{}Rel. FLOP}}\tabularnewline
 & {\scriptsize{}Phases} &  & {\scriptsize{}Improv.} & {\scriptsize{}Improv.} & {\scriptsize{}(MB)} & {\scriptsize{}Rel.} &  & \tabularnewline
\hline 
\hline 
\texttt{resnet20}$^{\dagger}$ & - & 8.75\% & - & - & 1.048 & 1x & 40.8 & 1x\tabularnewline
\hline 
\texttt{morph20\_1c1} & - & 7.35\% & 1.40\% & 16.0\% & 1.138 & 1.09x & 44.0 & 1.08x\tabularnewline
\hline 
\multirow{2}{*}{\texttt{morph20\_3c1}} & - & 7.10\% & 1.65\% & 18.9\% & \multirow{2}{*}{1.466} & \multirow{2}{*}{1.40x} & \multirow{2}{*}{56.5} & \multirow{2}{*}{1.38x}\tabularnewline
\cline{2-5} 
 & \texttt{1c1} & 6.83\% & 1.92\% & 21.9\% &  &  &  & \tabularnewline
\hline 
\multirow{2}{*}{\texttt{morph20\_3c3}} & - & 6.97\% & 1.78\% & 20.3\% & \multirow{2}{*}{1.794} & \multirow{2}{*}{1.71x} & \multirow{2}{*}{69.1} & \multirow{2}{*}{1.69x}\tabularnewline
\cline{2-5} 
 & \texttt{1c1,3c1} & 6.66\% & 2.09\% & 23.9\% &  &  &  & \tabularnewline
\hline 
\multirow{2}{*}{\texttt{morph20\_1c1\_2branch}} & - & 7.26\% & 1.49\% & 17.0\% & \multirow{2}{*}{1.227} & \multirow{2}{*}{1.17x} & \multirow{2}{*}{47.1} & \multirow{2}{*}{1.15x}\tabularnewline
\cline{2-5} 
 & \texttt{1c1,half} & \textbf{6.60\%} & \textbf{2.15\%} & \textbf{24.6\%} &  &  &  & \tabularnewline
\hline 
\hline 
\texttt{resnet56}$^{\dagger}$ & - & 6.97\% & - & - & 3.289 & 1x & 125.7 & 1x\tabularnewline
\hline 
\texttt{morph56\_1c1\_half} & - & 5.68\% & 1.29\% & 18.5\% & 3.468 & 1.05x & 132.0 & 1.05x\tabularnewline
\hline 
\multirow{2}{*}{\texttt{morph56\_1c1}} & - & 5.94\% & 1.03\% & 14.8\% & \multirow{2}{*}{3.647} & \multirow{2}{*}{1.11x} & \multirow{2}{*}{138.3} & \multirow{2}{*}{1.10x}\tabularnewline
\cline{2-5} 
 & \texttt{1c1\_half} & \textbf{5.37\%} & \textbf{1.60\%} & \textbf{23.0\%} &  &  &  & \tabularnewline
\hline 
\hline 
\texttt{resnet110}$^{\dagger}$ & - & 6.61\%{\scriptsize{}$\pm$0.16} & - & - & 6.649 & 1x & 253.1 & 1x\tabularnewline
\hline 
\texttt{morph110\_1c1\_half} & - & 5.74\% & 0.87\% & 13.2\% & 7.053 & 1.06x & 267.3 & 1.06x\tabularnewline
\hline 
\multirow{2}{*}{\texttt{morph110\_1c1}} & - & 5.93\% & 0.68\% & 10.3\% & \multirow{2}{*}{7.412} & \multirow{2}{*}{1.11x} & \multirow{2}{*}{279.9} & \multirow{2}{*}{1.11x}\tabularnewline
\cline{2-5} 
 & \texttt{1c1\_half} & \textbf{5.50\%} & \textbf{1.11\%} & \textbf{16.8\%} &  &  &  & \tabularnewline
\hline 
\end{tabular}}
\end{table}

\begin{table}
\caption{Comparison results between learning from morphing and learning from
scratch for the same network architectures on the CIFAR10 dataset.
\label{tab:results_for_scratch}}

\centering{}{\small%
\begin{tabular}{|c|c|c|c|c|}
\hline 
\multirow{2}{*}{Net Arch.} & {\scriptsize{}Error} & {\scriptsize{}Error} & {\scriptsize{}Abs. Perf.} & {\scriptsize{}Rel. Perf.}\tabularnewline
 & {\scriptsize{}(scratch)} & {\scriptsize{}(morph)} & {\scriptsize{}Improv.} & {\scriptsize{}Improv.}\tabularnewline
\hline 
\hline 
\texttt{morph20\_1c1} & 8.01\% & 7.35\% & 0.66\% & 8.2\%\tabularnewline
\hline 
\texttt{morph20\_1c1\_2branch} & 7.90\% & 6.60\% & 1.30\% & 16.5\%\tabularnewline
\hline 
\texttt{morph56\_1c1} & 7.37\% & 5.37\% & 2.00\% & 27.1\%\tabularnewline
\hline 
\texttt{morph110\_1c1} & 8.16\% & 5.50\% & 2.66\% & 32.6\%\tabularnewline
\hline 
\end{tabular}}
\end{table}

In this section, we report the results of the proposed morphing algorithms
based on current state-of-the-art ResNet \cite{HeZhangRenEtAl2015},
which is the winner of 2015 ImageNet classification task.

\subsection{Network Architectures of Modular Network Morphism}

We first introduce the network architectures used in the proposed
experiments. Fig. \ref{fig:resnet_morph}a shows the module template
in the design of ResNet \cite{HeZhangRenEtAl2015}, which is actually
a simple morphable two-way module. The first path consists of two
convolutional layers, and the second path is a shortcut connection
of identity mapping. The architecture of the ResNet module can be
abstracted as the graph in Fig. \ref{fig:cifar_arch}a. For the morphed
networks, we first split the identity mapping layer in the ResNet
module into two layers with a scaling factor of 0.5. Then each of
the scaled identity mapping layers is able to be further morphed into
two convolutional layers. Fig. \ref{fig:resnet_morph}b illustrates
the case with only one scaled identity mapping layer morphed into
two convolutional layers, and its equivalent graph abstraction is
shown in Fig. \ref{fig:cifar_arch}b. To differentiate network architectures
adopted in this research, the notation \texttt{morph\_<k1>c<k2>} is
introduced, where \texttt{k1} and \texttt{k2} are kernel sizes in
the morphed network. If both of scaled identity mapping branches are
morphed, we append a suffix of `\texttt{\_2branch}'. Some examples
of morphed modules are illustrated in Fig. \ref{fig:cifar_arch}.
We also use the suffix `\texttt{\_half}' to indicate that only one
half (odd-indexed) of the modules are morphed, and the other half
are left as original ResNet modules.

\subsection{Experimental Results on the CIFAR10 Dataset}

\begin{table}
\caption{Experimental results of networks morphed from ResNet-20, ResNet-56,
and ResNet-110 on the CIFAR100 dataset. \label{tab:results-cifar100}}

\centering{}{\hspace*{-0.15in}\small%
\begin{tabular}{|c|c|c|c|c|c|c|c|c|}
\hline 
\multirow{2}{*}{Net Arch.} & {\scriptsize{}Intermediate } & \multirow{2}{*}{Error} & {\scriptsize{}Abs. Perf.} & {\scriptsize{}Rel. Perf. } & {\scriptsize{}\#Params.} & {\scriptsize{}\#Params.} & \multirow{2}{*}{{\scriptsize{}FLOP (million)}} & \multirow{2}{*}{{\scriptsize{}Rel. FLOP}}\tabularnewline
 & {\scriptsize{}Phases} &  & {\scriptsize{}Improv.} & {\scriptsize{}Improv.} & {\scriptsize{}(MB)} & {\scriptsize{}Rel.} &  & \tabularnewline
\hline 
\hline 
\texttt{resnet20} & - & 32.82\% & - & - & 1.070 & 1x & 40.8 & 1x\tabularnewline
\hline 
\texttt{morph20\_1c1} & - & 31.70\% & 1.12\% & 3.4\% & 1.160 & 1.08x & 44.0 & 1.08x\tabularnewline
\hline 
\hline 
\texttt{resnet56} & - & 29.83\% & - & - & 3.311 & 1x & 125.8 & 1x\tabularnewline
\hline 
\multirow{1}{*}{\texttt{morph56\_1c1}} & \texttt{1c1\_half} & 27.52\% & 2.31\% & \multirow{1}{*}{7.7\%} & 3.670 & 1.11x & 138.3 & \multirow{1}{*}{1.10x}\tabularnewline
\hline 
\hline 
\texttt{resnet110} & - & 28.46\% & - & - & 6.672 & 1x & 253.2 & 1x\tabularnewline
\hline 
\multirow{1}{*}{\texttt{morph110\_1c1}} & \texttt{1c1\_half} & 26.81\% & 1.65\% & \multirow{1}{*}{5.8\%} & 7.434 & 1.11x & 279.9 & \multirow{1}{*}{1.11x}\tabularnewline
\hline 
\end{tabular}}
\end{table}

\begin{table}
\caption{Comparison results between learning from morphing and learning from
scratch for the same network architectures on the CIFAR100 dataset.
\label{tab:results_for_scratch-cifar100}}

\centering{}{\small%
\begin{tabular}{|c|c|c|c|c|}
\hline 
\multirow{2}{*}{Net Arch.} & {\scriptsize{}Error} & {\scriptsize{}Error} & {\scriptsize{}Abs. Perf.} & {\scriptsize{}Rel. Perf.}\tabularnewline
 & {\scriptsize{}(scratch)} & {\scriptsize{}(morph)} & {\scriptsize{}Improv.} & {\scriptsize{}Improv.}\tabularnewline
\hline 
\hline 
\texttt{morph20\_1c1} & 33.63\% & 31.70\% & 1.93\% & 5.7\%\tabularnewline
\hline 
\texttt{morph56\_1c1} & 32.58\% & 27.52\% & 5.06\% & 15.5\%\tabularnewline
\hline 
\texttt{morph110\_1c1} & 31.94\% & 26.81\% & 5.13\% & 16.1\%\tabularnewline
\hline 
\end{tabular}}
\end{table}

\begin{figure}
\begin{centering}
\subfloat[CIFAR10]{\includegraphics[width=0.5\linewidth]{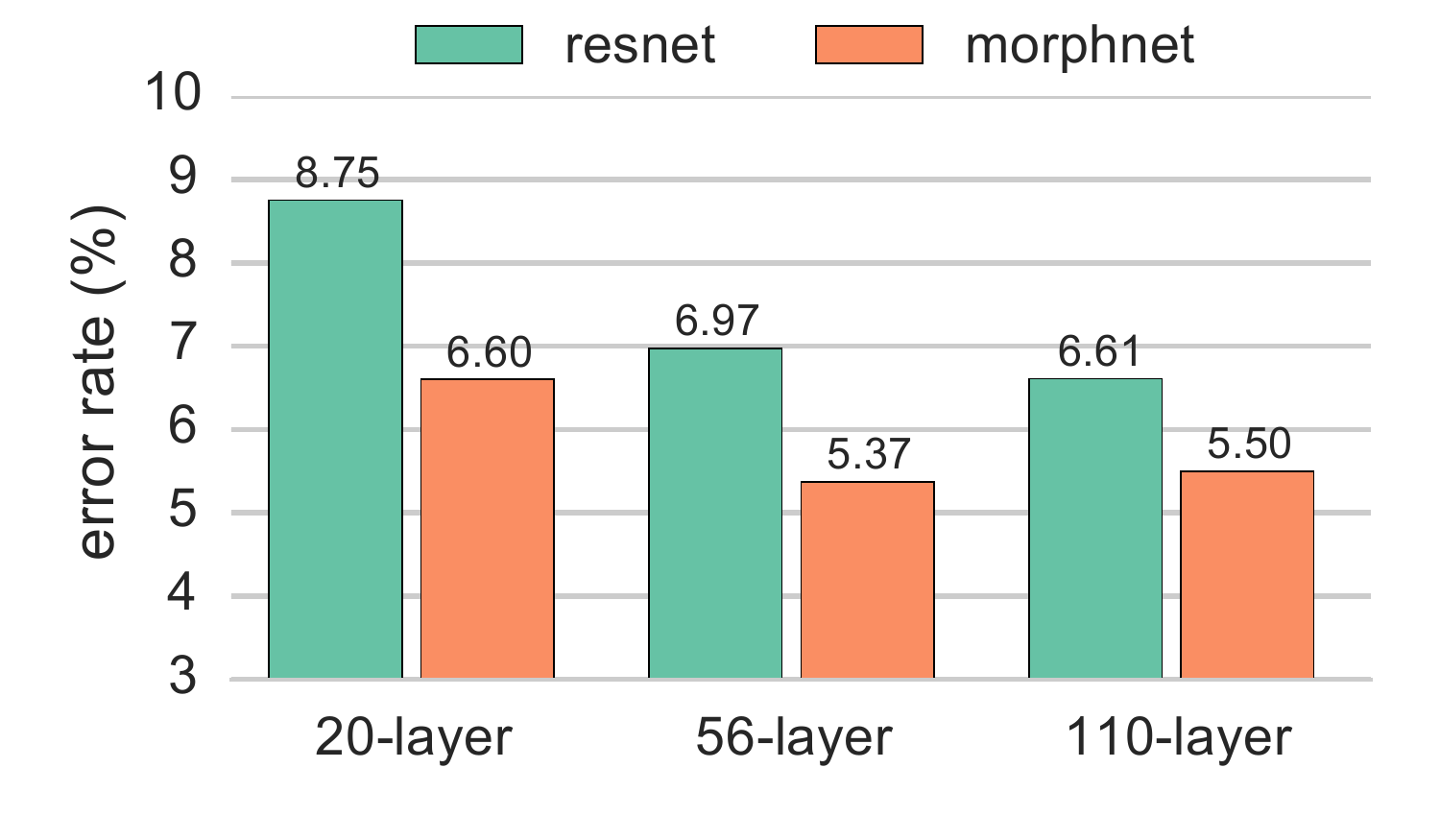}

}\subfloat[CIFAR100]{\includegraphics[width=0.5\linewidth]{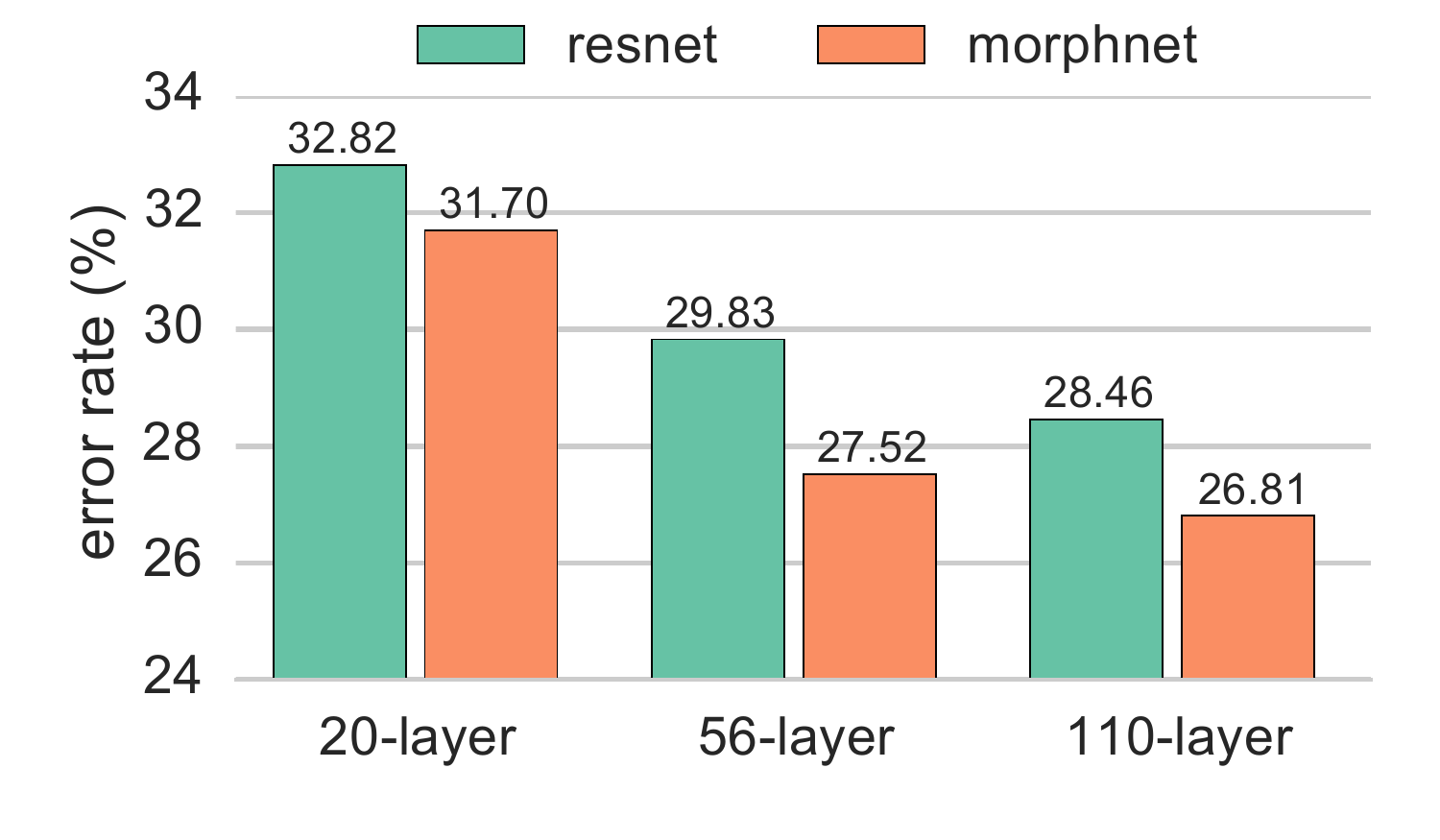}

}
\par\end{centering}

\caption{Comparison results of ResNet and morphed networks on the CIFAR10 and
CIFAR100 datasets. \label{fig:results}}
\end{figure}

CIFAR10 \cite{KrizhevskyHinton2009} is a benchmark dataset on image
classification and neural network investigation. It consists of 32$\times$32
color images in 10 categories, with 50,000 training images and 10,000
testing images. In the training process, we follow the same setup
as in \cite{HeZhangRenEtAl2015}. We use a decay of 0.0001 and a momentum
of 0.9. We adopt the simple data augmentation with a pad of 4 pixels
on each side of the original image. A 32$\times$32 view is randomly
cropped from the padded image and a random horizontal flip is optionally
applied.

Table \ref{tab:results} shows the results of different networks morphed
from ResNet \cite{HeZhangRenEtAl2015}. Notice that it is very challenging
to further improve the performance, for ResNet has already boosted
the number to a very high level. E.g., ResNet \cite{HeZhangRenEtAl2015}
made only 0.36\% performance improvement by extending the model from
56 to 110 layers (Table \ref{tab:results}). From Table \ref{tab:results}
we can see that, with only 1.2x or less computational cost, the morphed
networks achieved 2.15\%, 1.60\%, 1.11\% performance improvements
over the original ResNet-20, ResNet-56, and ResNet-110 respectively.
Notice that the relative performance improvement can be up to 25\%.
Table \ref{tab:results} also compares the number of parameters of
the original network architectures and the ones after morphing. As
can be seen, the morphed ones only have a little more parameters than
the original ones, typically less than 1.2x.

Except for large error rate reduction achieved by the morphed network,
one exciting indication from Table \ref{tab:results} is that the
morphed 20-layered network \texttt{morph20\_3c3} is able to achieve
slightly lower error rate than the 110-layered ResNet (6.60\% vs 6.61\%),
and its computational cost is actually less than $\nicefrac{1}{5}$
of the latter one. Similar results have also been observed from the
morphed 56-layered network. It is able to achieve a 5.37\% error rate,
which is even lower than those of ResNet-110 (6.61\%) and ResNet-164
(5.46\%) \cite{HeZhangRenEtAl2016}. These results are also illustrated
in Fig. \ref{fig:results}(a).

Several different architectures of the morphed networks were also
explored, as illustrated in Fig. \ref{fig:cifar_arch} and Table \ref{tab:results}.
First, when the kernel sizes were expanded from $1\times1$ to $3\times3$,
the morphed networks (\texttt{morph20\_3c1} and \texttt{morph20\_3c3})
achieved better performances. Similar results were reported in \cite{SimonyanZisserman2014}
(Table 1 for models C and D). However, because the morphed networks
almost double the computational cost, we did not adopt this approach.
Second, we also tried to morph the other scaled identity mapping layer
into two convolutional layers (\texttt{morph20\_1c1\_2branch}), the
error rate was further lowered for the 20-layered network. However,
for the 56-layered and 110-layered networks, this strategy did not
yield better results.

We also found that the morphed network learned with multiple phases
could achieve a lower error rate than that learned with single phase.
For example, the networks \texttt{morph20\_3c1} and \texttt{morph20\_3c3}
learned with intermediate phases achieved better results in Table
\ref{tab:results}. This is quite reasonable as it divides the optimization
problem into sequential phases, and thus is possible to avoid being
trapped into a local minimum to some extent. Inspired by this observation,
we then used a \texttt{1c1\_half} network as an intermediate phase
for the \texttt{morph56\_1c1} and \texttt{morph110\_1c1} networks,
and better results have been achieved.

Finally, we compared the proposed learning scheme against learning
from scratch for the networks with the same architectures. These results
are illustrated in Table \ref{tab:results_for_scratch}. As can be
seen, networks learned by morphing is able to achieve up to 2.66\%
absolute performance improvement and 32.6\% relative performance improvement
comparing against learning from scratch for the \texttt{morph110\_1c1}
network architecture. These results are quite reasonable as when networks
are learned by the proposed morphing scheme, they have already been
regularized and shall have lower probability to be trapped into a
bad-performing local minimum in the continual training process than
the learning from scratch scheme. One may also notice that, \texttt{morph110\_1c1}
actually performed worse than \texttt{resnet110} when learned from
scratch. This is because the network architecture \texttt{morph\_1c1}
is proposed for morphing, and the identity shortcut connection is
scaled with a factor of 0.5. It was also reported that residual networks
with a constant scaling factor of 0.5 actually led to a worse performance
in \cite{HeZhangRenEtAl2016} (Table 1), while this performance degradation
problem could be avoided by the proposed morphing scheme.

\subsection{Experimental Results on the CIFAR100 Dataset}

CIFAR100 \cite{KrizhevskyHinton2009} is another benchmark dataset
for tiny images that consists of 100 categories. There are 500 training
images and 100 testing images per category. The proposed experiments
on CIFAR100 follows the same setup as in the experiments on CIFAR10.
The experimental results are illustrated in Table \ref{tab:results-cifar100}
and Fig. \ref{fig:results}(b). As shown, the performance improvement
is also significant: with only around 1.1x computational cost, the
absolute performance improvement can be up to 2\% and the relative
performance improvement can be up to 8\%. For the morphed 56-layered
network, it also achieves better performance than the 110-layered
ResNet (27.52\% vs 28.46\%), and with only around one half of the
computation. Table \ref{tab:results_for_scratch-cifar100} also compares
the proposed learning scheme against learning from scratch. More than
5\% absolute performance improvement and around 16\% relative performance
improvement were achieved.

\subsection{Experimental Results on the ImageNet Dataset}

\begin{table}
\caption{Experimental results of networks morphed from ResNet-18 on the ImageNet
dataset. \label{tab:results-imagenet}}

\centering{}{\small%
\begin{tabular}{|c|c|c|c|c|c|c|}
\hline 
\multirow{2}{*}{Net Arch.} & \multirow{2}{*}{Eval. Mode} & \multirow{2}{*}{Top-1 Error} & {\scriptsize{}Abs. Perf.} & {\scriptsize{}Rel. Perf. } & \multirow{2}{*}{{\scriptsize{}FLOP (billion)}} & \multirow{2}{*}{{\scriptsize{}Rel. FLOP}}\tabularnewline
 &  &  & {\scriptsize{}Improv.} & {\scriptsize{}Improv.} &  & \tabularnewline
\hline 
\hline 
\multirow{2}{*}{\texttt{resnet18}} & 1-view & 32.56\% & - & - & \multirow{2}{*}{1.814} & \multirow{2}{*}{1x}\tabularnewline
\cline{2-5} 
 & 10-view & 30.86\% & - & - &  & \tabularnewline
\hline 
\multirow{2}{*}{\texttt{morph18\_1c1}} & 1-view & 31.69\% & 0.87\% & 2.7\% & \multirow{2}{*}{1.917} & \multirow{2}{*}{1.06x}\tabularnewline
\cline{2-5} 
 & 10-view & 29.90\% & 0.96\% & 3.1\% &  & \tabularnewline
\hline 
\multirow{2}{*}{\texttt{resnet34}} & 1-view & 29.08\% & - & - & \multirow{2}{*}{3.664} & \multirow{2}{*}{1x}\tabularnewline
\cline{2-5} 
 & 10-view & 27.32\% & - & - &  & \tabularnewline
\hline 
\multirow{2}{*}{\texttt{morph34\_1c1}} & 1-view & 27.90\% & 1.18\% & 4.1\% & \multirow{2}{*}{3.972} & \multirow{2}{*}{1.08x}\tabularnewline
\cline{2-5} 
 & 10-view & 26.20\% & 1.12\% & 4.1\% &  & \tabularnewline
\hline 
\end{tabular}}
\end{table}

\begin{wrapfigure}[15]{R}{0.65\columnwidth}%
\begin{centering}
\subfloat[18-layer.]{\begin{centering}
\includegraphics[width=0.5\linewidth]{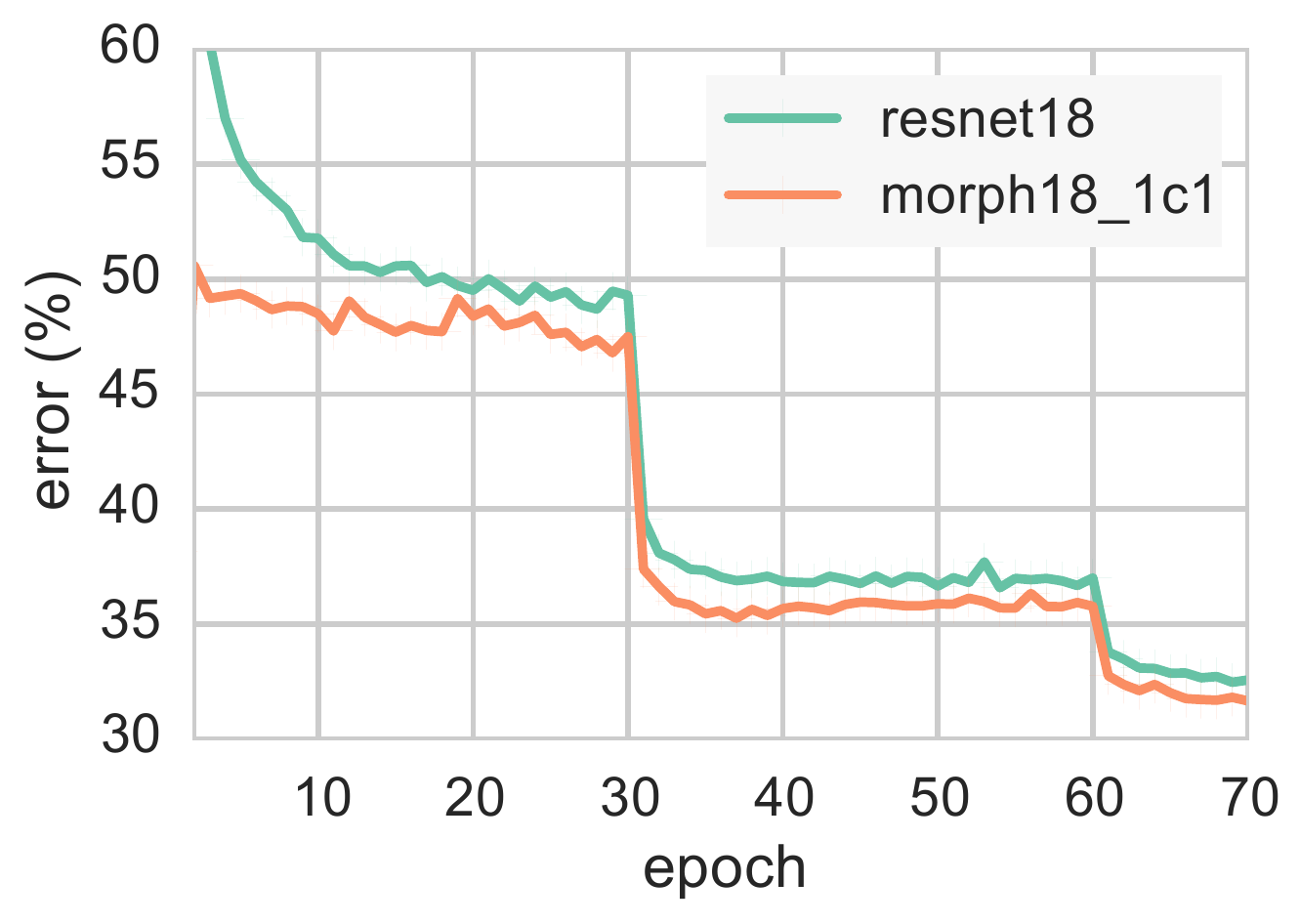}
\par\end{centering}

}\subfloat[34-layer.]{\begin{centering}
\includegraphics[width=0.5\linewidth]{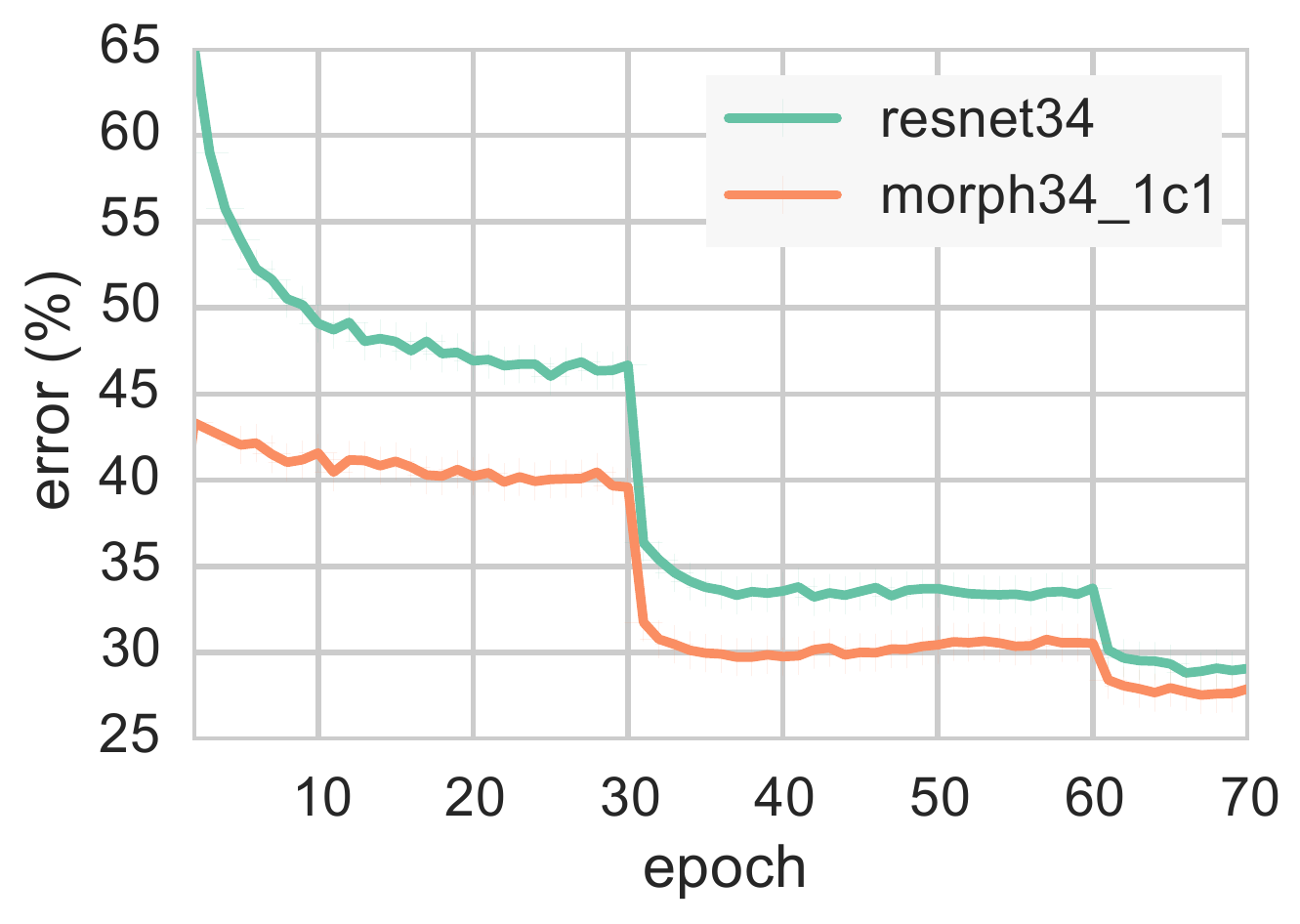}
\par\end{centering}

}
\par\end{centering}

\caption{Evaluation errors on the ImageNet dataset. \label{fig:results_imagenet}}
\end{wrapfigure}%

We also evaluate the proposed scheme on the ImageNet dataset \cite{RussakovskyDengSuEtAl2014}.
This dataset consists of 1,000 object categories, with 1.28 million
training images and 50K validation images. For the training process,
we use a decay of 0.0001 and a momentum of 0.9. The image is resized
to guarantee its shorter edge is randomly sampled from {[}256,480{]}
for scale augmentation. A $224\times224$ patch or its horizontal
flip is randomly cropped from the resized image, with the image data
per-channel normalized. We train the networks using SGD with a batch
size of 256. The learning rate starts from 0.1 and is decreased with
a factor of 0.1 for every 30 epochs. The networks are trained for
a total of 70 epochs.

The comparison results of the morphed and original ResNets for both
18-layer and 34-layer networks are illustrated in Table \ref{tab:results-imagenet}
and Fig. \ref{fig:results_imagenet}. As shown in Table \ref{tab:results-imagenet},
\texttt{morph18\_1c1} and \texttt{morph34\_1c1} are able to achieve
lower error rates than ResNet-18 and ResNet-34 respectively, and the
absolute performance improvements can be up to 1.2\%. We also draw
the evaluation error curves in Fig. \ref{fig:results_imagenet}, which
shows that the morphed networks \texttt{morph18\_1c1} and \texttt{morph34\_1c1}
are much more effective than the original ResNet-18 and ResNet-34
respectively.

\section{Conclusions}

This paper presented a systematic study on the problem of network
morphism at a higher level, and tried to answer the central question
of such learning scheme, i.e., whether and how a convolutional layer
can be morphed into an arbitrary module. To facilitate the study,
we abstracted a modular network as a graph, and formulated the process
of network morphism as a graph transformation process. Based on this
formulation, both simple morphable modules and complex modules have
been defined and corresponding morphing algorithms have been proposed.
We have shown that a convolutional layer can be morphed into any module
of a network. We have also carried out experiments to illustrate how
to achieve a better performing model based on the state-of-the-art
ResNet with minimal extra computational cost on benchmark datasets.
The experimental results have demonstrated the effectiveness of the
proposed morphing approach.

\bibliographystyle{iclr2017_conference}
\bibliography{dcnn}

\end{document}